\documentclass{article}





\usepackage[accepted]{icml2022}

\usepackage[utf8]{inputenc} 
\usepackage[T1]{fontenc}    
\usepackage{url}            
\usepackage{booktabs}       
\usepackage{amsfonts,amsmath,bm}       
\usepackage{nicefrac}       
\usepackage{microtype}      

\usepackage[utf8]{inputenc} 
\usepackage[T1]{fontenc}    
\usepackage{microtype}
\usepackage{graphicx}
\usepackage{longtable}
\usepackage{subcaption}
\usepackage{booktabs} 
\usepackage{amsfonts}
\usepackage{amsthm}
\usepackage{nicefrac}
\usepackage{enumitem}
\usepackage{balance}
\usepackage{xfrac}
\usepackage{setspace}
\usepackage{lscape}

\usepackage{hyperref}
\usepackage{color}



\usepackage{bm,amsmath,amssymb,amsfonts}
\usepackage{longtable,lscape}
\usepackage{booktabs}
\newtheorem{theorem}{Theorem}
\newtheorem{proposition}{Proposition}
\newtheorem{definition}{Definition}

\definecolor{mydarkblue}{rgb}{0,0.08,0.45}
\hypersetup{ 
    colorlinks,
    citecolor=mydarkblue,
    filecolor=mydarkblue,
    linkcolor=mydarkblue,
    urlcolor=mydarkblue 
}

\usepackage{url}
\usepackage{multirow}
\DeclareCaptionType{Algorithm}

\DeclareMathOperator*{\argmin}{argmin}

\newcommand{\indep}{\perp \!\!\! \perp}

\usepackage[utf8]{inputenc} 
\usepackage[T1]{fontenc}    
\usepackage{booktabs}       
\usepackage{amsfonts}       
\usepackage{nicefrac}       

\usepackage{algorithm}
\usepackage{algorithmic}
\usepackage{bm}
\usepackage{enumitem}

\newcommand{\bfx}{\mathbf{x}}
\newcommand{\bmx}{\bm{x}}

\newcommand{\mry}{\mathrm{y}}

\newcommand{\mra}{\mathrm{a}}

\newcommand{\bmg}{\bm{g}}
\newcommand{\bxg}{\mathbf{g}}

\newcommand{\ex}{\mathbb{E}}
\newcommand{\setx}{\mathbb{X}}
\newcommand{\seta}{\mathbb{A}}
\newcommand{\mrw}{\mathrm{w}}

\icmltitlerunning{End-to-End Balancing for Causal Inference}

%


\begin{document}

\twocolumn[
\icmltitle{End-to-End Balancing for Causal Continuous Treatment-Effect Estimation}



\icmlsetsymbol{equal}{*}

\begin{icmlauthorlist}
\icmlauthor{Mohammad Taha Bahadori}{yyy}
\icmlauthor{Eric J. Tchetgen Tchetgen}{yyy,comp}
\icmlauthor{David E. Heckerman}{yyy}
\end{icmlauthorlist}

\icmlaffiliation{yyy}{Amazon.com, Inc.}
\icmlaffiliation{comp}{Wharton School of the University of Pennsylvania}

\icmlcorrespondingauthor{Mohammad Taha Bahadori}{bahadorm@amazon.com}

\icmlkeywords{Machine Learning, ICML}

\vskip 0.3in
]

\printAffiliationsAndNotice{} 

\begin{abstract}
We study the problem of observational causal inference with continuous treatments in the framework of inverse propensity-score weighting. To obtain stable weights, we design a new algorithm based on entropy balancing that learns weights to directly maximize causal inference accuracy using end-to-end optimization. In the process of optimization, these weights are automatically tuned to the specific dataset and causal inference algorithm being used. We provide a theoretical analysis demonstrating consistency of our approach. Using synthetic and real-world data, we show that our algorithm estimates causal effect more accurately than baseline entropy balancing. 
\end{abstract}

\section{Introduction}
In many applications, including business, social sciences, and health sciences, we wish to infer the causal effect of a continuous treatment, such as the effect of drug dose or administration duration on a health outcome variable. Often, confounding factors exist that influence both treatment and response variable, making it necessary to account for their impact to yield accurate causal estimation. While methods for doing so have been well studied when treatments are binary, 
causal inference with continuous treatments is far more challenging and largely understudied \citep{galagate2016causal,ai2021estimation}. This is so primarily because continuous treatments induce  uncountably many potential outcomes per unit (e.g., a subject), only one of which is observed for each unit and across units.

Propensity score weighting \citep{robins2000marginal,imai2004causal}, stand-alone or combined with regression-based models to achieve double robustness \citep{diaz2013targeted,kennedy2017nonparametric,nie2020vcnet}, has become the state of the art for causal inference.
If the weights, inversely proportional to the conditional distribution of the treatment given the confounders, are correctly modeled, the weighted population will appear to come from a randomized study.
However, this approach faces several challenges. One, the weights only balance the confounders in expectation, not necessarily in the given data \citep{zubizarreta2011matching}. Two, the weights can be very large for some units, leading to unstable estimation and  uncertain inference. 
As a possible remedy, entropy balancing \citep{hainmueller2012entropy} proposes the use of an alternate set of weights such that they are less extreme but still have the property that the weight data appears to come from a randomized study. 

In this work, we note that balanced weights do not directly optimize the accuracy of causal inference, and we introduce an alternative approach that does. We propose \textit{End-to-End Balancing} (E2B), which uses synthetic data resembling the given data and end-to-end optimization to identify these weights. The E2B weights are thus customized for the given dataset and causal inference algorithm. Because we do not know the true treatment response function in the real data, we propose a new approach to generate synthetic training datasets and show that this approach is robust to model misspecification.

To theoretically analyze end-to-end balancing, we define \textit{Generalized Stable Weights} (GSW) for causal inference as a generalization of the stable weights proposed by \citet{robins2000marginal}. We prove that weights learned by entropy balancing for continuous treatments, including E2B weights, are unbiased estimators of GSWs. We also show that E2B weights are asymptotically consistent and efficient estimators of the population weights.

We perform three sets of experiments to demonstrate accuracy improvements by E2B. Two experiments with synthetic data, one with linear and another with non-linear response functions, and show that E2B is more accurate than baseline entropy balancing and inverse propensity score techniques. We also study the impact of mis-specification in the synthetic data generation process. In the experiments on real-world data, we qualitatively evaluate the average treatment effect function learned by E2B. We also qualitatively analyze the patterns in E2B weights.

\section{Problem Definition and Related Work}
\paragraph{Problem Statement.} Suppose we have the triplet of $(\bfx, \mra, \mry)$, where $\bfx \in \setx \subset \mathbb{R}^{r}$, $\mra \in \seta \subset \mathbb{R}$ and $\mry \in \mathbb{R}$ denote the confounders, treatments, and response variables, respectively, from an observational causal study. In our continuous treatment setting \citep[Ch. 1.2.6]{galagate2016causal}, we denote the value of $\mathrm{y}$ after intervening by setting the treatment to $a$ as $\mathrm{y}^{(a)}$.  This quantity is commonly known as a {\em potential outcome}..   Given an i.i.d. sample of size $n$, $\{(\bmx_i, a_i, y_i)\}_{i=1}^{n}$, our objective is to eliminate the impact of the confounders and identify the average treatment effect function $\mu(a) = \ex[\mry^{(a)}]$, which is also called the response function. We make the two classic assumptions: (1) Strong ignorability: $\mry^{(\mra)} \indep \mra~|~\bfx$. (i.e., no hidden confounders) and (2) Positivity: $0 < P(\mra | \bfx) < 1$. 

\paragraph{General Causal Inference Literature.} The literature on causal inference is vast and we refer the reader to the books for the general inquiry \citep{pearl2009causality,imbens2015causal,spirtes2000causation,peters2017elements}. Instead, we focus on reviewing the inference techniques for \textit{continuous} treatments. In particular, we narrow down our focus on propensity score weighting approaches \citep{robins2000marginal,imai2004causal}, because they can either be used alone or combined with the regression algorithms to create doubly robust algorithms.

\paragraph{Causal Inference via Weighting.} In inverse propensity score weighting, a popular approach for causal inference, we create a pseudo-population by weighting data points such that, in this population, the confounders and treatments are independent. Thus, regular regression algorithms can estimate the causal response curve using the pseudo-population, which resembles data from randomized trials. Throughout this paper, we will denote the parameters of the pseudo-population with a tilde mark---for example, $\widetilde{f}(a)$ denotes the marginal density of treatments in the weighted population. Multiple forms of propensity scores have been proposed for continuous treatments \citep{hirano2004propensity,imai2004causal}. 
The commonly-used \textit{stablized weights} \citep{robins2000marginal,zhu2015boosting} are defined as the ratio of marginal density over the conditional density of the treatments: $sw = \sfrac{f(a)}{f(a|\bmx)}$. 

\paragraph{Problems with Propensity Scores.} \citet{zubizarreta2011matching} list two challenges with the propensity scores: One, the weights only balance the confounders in  expectation, not necessarily in the given data. (Two, the weights can be very large for some of the data points, leading to unstable estimations.  The challenges are amplified in the continuous setting because computing the stabilized weights requires correctly choosing two models, one for the marginal and one for the conditional distributions of the treatments. \citet{kang2007demystifying} and \citet{smith2005does} provide multiple pieces of evidence that the propensity score methods can lead to large biases in the estimations.  \citet{robins2007comment} propose techniques to fix the large weights problems in the binary treatment examples discussed by \citet{kang2007demystifying}\citep{li2018balancing,zhao2019covariate} describe techniques to learn more robust propensity scores for binary treatments. The case of continuous treatments, however, has received considerably less attention.

\paragraph{Entropy Balancing.} To address the problem of extreme weights, \textit{Entropy Balancing (EB)} \citep{hainmueller2012entropy} estimates weights such that they balance the confounders subject to a measure of dispersion on the weights to prevent extremely large weights. Other loss functions using different dispersion metrics have been proposed for balancing \citep{zubizarreta2015stable,chan2016globally}. \citet{zhao2016entropy} show that the entropy balancing is doubly robust. Entropy balancing has been extended to the continuous treatment setting  \citep{fong2018covariate,vegetabile2021nonparametric}, where the balancing condition ensures that weighted correlations between the confounders and the treatment are zero.  \citet{ai2021estimation} propose a method for estimating the counterfactual distribution in the continuous treatment setting.

\section{Methodology}
To describe our end-to-end balancing algorithm, we first need to describe entropy balancing for continuous treatments with base weights.

\subsection{Entropy Balancing for Continuous Treatments}
\paragraph{Causal Inference via Entropy Balancing.} Entropy balancing creates a pseudo-population using instance weights $w_i\in \mathbb{R}_{>0}, i=1, \ldots, n$, in which the treatment $\mra$ and the confounders $\bfx$ are approximately independent from each other. The independence is enforced by first selecting a set of functions on the confounders $\phi_k(\cdot): \setx\mapsto \mathbb{R}$, for $k=1, \ldots, K$, that are dense and complete in $L^2$ space. Given the $\phi$ functions, we approximate the independence relationship by $\widehat{\mathbb{E}}_n[\mra \phi_k(\bfx)] = 0$, for $k=1, \ldots, K$, where the empirical expectation $\widehat{\mathbb{E}}_n$ is performed on the pseudo-population. Hereafter, we will denote the mapped data points as $\bm{\phi}(\bmx_i) = [\phi_1(\bmx_i), \ldots, \phi_K(\bmx_i)]$. The $\phi_k(\cdot)$ functions can be chosen based on prior knowledge or defined by the penultimate layer of a neural network that predicts $(\mra, \mry)$ from $\bfx$. Our contributions in this paper are orthogonal to the choice of the $\phi_k(\cdot)$ functions and can benefit from ideas on learning these functions \citep{zeng2020double}. The data-driven choice of the number of bases $K$ is beyond the scope of current paper and left to future work.

\paragraph{Balancing Constraint for the Continuous Treatments.} Following \citep{fong2018covariate,vegetabile2021nonparametric}, in the case of continuous treatments, we first de-mean the confounders $\bm{\phi}(\bmx_i)$ and treatments $\mra$ such that without loss of generality they are taken to have mean zero. The balancing objective is to learn a set of weights $w_i, i=1, \ldots, n$ that satisfy $\sum_{i=1}^{n}w_i\bm{\phi}(\bmx_i)=\bm{0}$, $\sum_{i=1}^{n}w_ia_i=0$, and $\sum_{i=1}^{n}w_ia_i\bm{\phi}(\bmx_i)=\bm{0}$. We can write these three constraints in a compact form by defining a $(2K+1)$--dimensional vector $\bmg_i = [\bm{\phi}(\bmx_i), a_i, a_i\bm{\phi}(\bmx_i)]$. The constraints become 
$\sum_{i=1}^{n}w_i\bmg_i=\bm{0}$. We stack the $\bmg$ vectors in a $(2K+1)\times n$ dimensional matrix $\bm{G}$ for compact notation. 
In this work, without loss of generality, we will present our idea with first order balancing, with neither higher order moments  \citep{galagate2016causal} nor balancing in the kernel space \citep{wong2018kernel,kallus2019kernel,hazlett2018kernel}.

\paragraph{Primal and Dual EB.} A variety of dispersion metrics have been proposed as objective function for minimization, including entropy and variance of the weights  \citep{wang2020minimal}. \citet{hainmueller2012entropy} originally proposed minimizing the KL-divergence between the weights and a set of {\em base weights} $q_i, i=1, \ldots, n$. We discuss the details on choice of base weights below, however, note that $q_i=\mathrm{constant}$ leads to maximization of the entropy of weights. Using this dispersion function and the balancing constraints, entropy balancing optimization is as follows:
\begin{align}
    \widehat{\bm{w}}=& \argmin_{\bm{w}}~\sum_{i=1}^nw_i\log\left(\frac{w_i}{q_i}\right),\label{eq:primal}\\
    \text{s.t.   } \qquad
    &\text{(i)}~  \bm{G}\bm{w} = \bm{0}, \nonumber\\
    &\text{(ii)}~ \bm{1}^{\top}\bm{w} = 1,\nonumber\\
    &\text{(iii)}~ w_i \geq 0 \text{ for } i=1, \ldots, n.\nonumber
\end{align}

The above optimization problem can be solved efficiently using its Lagrangian dual:
\begin{align}
    \widehat{\bm{\lambda}} &= \argmin_{\bm{\lambda}}\; \log\left(\bm{1}^{\top}\exp\left(- \bm{\lambda}^{\top}\bm{G} +\bm{\ell}\right) \right),
    \label{eq:dual}
\end{align}
where $\ell_i = \log q_i$ are the \textit{log-base-weights}. Given the solution $\widehat{\bm{\lambda}}$, the balancing weights can be computed as $\bm{w} = \mathrm{softmax}\left(- \widehat{\bm{\lambda}}^{\top}\bm{G} +\bm{\ell}\right)$, where $\mathrm{softmax}(\bm{v}) = \sfrac{\exp \bm{v}}{\left(\bm{1}^{\top}\exp \bm{v}\right)}$ for any vector $\bm{v}$. We select the mapping dimension $K$ such that problem (\ref{eq:dual}) is well-conditioned and leave the analysis of the high dimensional setting $K \approx n$ to future work. We can also add an $L_1$ penalty term to the dual objective in Eq. (\ref{eq:dual}), which corresponds to approximate balancing \citep{wang2020minimal}. 

An observation that is key to our approach is that each log-base-weight $\ell_i$ is a degree of freedom that we have in the Eq. (\ref{eq:dual}) to improve the quality of causal estimation. In the next section we propose to parameterize the log-base-weights and optimize them for causal inference using synthetic data. Our analysis in Section \ref{sec:analysis} shows that with arbitrary base weights, causal estimation using the weights learned in Eq. (\ref{eq:dual}) will be consistent.

\subsection{Learning the Base Weights}
\citet{hainmueller2012entropy} suggests two approaches for choosing base weights: (1) weights obtained from a conventional propensity score model and (2), in the context of survey design, using knowledge about the sampling design. We argue that a data-driven approach that learns customized base weights for a given dataset and weighted causal regression algorithm can further improve performance. For example, when we use weighted linear regression for causal inference, appropriate base weights can decrease the condition number of the design matrix and thus improve the quality of the regression. From another point of view, minimizing the KL-divergence between the weights $w$ and the base weights $q$ can act as a regularizer and improve the accuracy of weight estimation. Finally, as we will show in our algorithm, we can embed our prior distribution over possible response functions in the base weights and  improve the causal inference accuracy.


To address this problem, we define the log-base-weights $\bm{\ell}$ as a parametric function (e.g., a neural network) of the treatments and confounders---that is, $\ell_{\bm{\theta}}(\bmx, a)$. We learn the base weights with the goal of improving the accuracy of the subsequent weighted regression. This is challenging because simply optimizing the weighted regression loss (e.g., weighted MSE) leads to degenerate results. That is, learning $\bm{\ell}$ to minimize the loss in prediction of the outcome $\mry$ will lead to exclusion of the difficult-to-predict data points from the regression, which is undesirable. Thus, we need to find another loss function to optimize, ideally a loss function that directly minimizes the error in estimation of the response function $\mu(a)$.


\begin{algorithm}[tb]
  \caption{Stochastic Training of $\ell_{\bm{\theta}}$ for End-to-End Balancing}
  \label{alg:nn}
\begin{algorithmic}[1]
\REQUIRE Data tuples $(\bmx_i, a_i, y_i)$ for $i=1, \ldots, n$ with an unknown response function $\mu(a)$.
\REQUIRE Representation functions $\bm{\phi}(\cdot)$ and $\bm{\psi}(\cdot)$, split size $n_1 < n$ and batch size $B$.
\STATE Generate a random set of indexes $I, |I| = n_1$ and its complement $I^{c}$ and split the data to $S$ and $S^c$ using them.
\STATE Estimate the distribution of noise in $\mry$ given $(\mra, \bfx)$ as $\widehat{F}_{\varepsilon}$. \label{line:noise}
\STATE Compute $\bm{G}$ by stacking $\bmg_i = [\bm{\phi}(\bmx_i), a_i, a_i\bm{\phi}(\bmx_i)]$, for $i=1, \ldots, n$.
\FOR{Number of Iterations}
\STATE Generate $B$ datasets  $\{(\bmx_i, a_i, \overline{y}_{i, b})\}_{i=1}^{n}$ for $b=1, \ldots, B$ using $\varepsilon\sim \widehat{F}_{\varepsilon}$, and randomly selected $\overline{\mu}(a)_b$ response functions. \label{line:rand}
\STATE $\ell_{i} \gets \ell_{\bm{\theta}}(\bm{\psi}(a_i, \bm{x}_i))$.
\STATE \small$\widehat{\bm{\lambda}} \gets \argmin_{\bm{\lambda}}\;\left\{ \log\left(\bm{1}^{\top}\exp\left(- \bm{\lambda}^{\top}\bm{G} +\bm{\ell}\right)\right)+\gamma\|\bm{\lambda}\|_1\right\}$\normalsize\quad using only $S$ data.\label{line:dual}
\STATE $\bm{w} \gets \mathrm{softmax}\left(- \widehat{\bm{\lambda}}^{\top}\bm{G} +\bm{\ell}\right)$ using only $S^c$ data. \label{line:softmax}
\STATE $\widehat{\mu}(a)_b\gets$ weighting-based causal estimates using $(a_i, \overline{y}_{i,b}, w_{i})$ in $S^c$ for $b=1, \ldots, B$. \label{line:regress}
\STATE Take a step in $\bm{\theta}$ to minimize $\frac{1}{B}\sum_{b=1}^{B}\left(\widehat{\mu}(a)_b - \overline{\mu}(a)_b\right)^2$. \label{line:loss}
\ENDFOR
\RETURN The $\ell_{\widehat{\bm{\theta}}}$ function.
\end{algorithmic}
\end{algorithm}

Our idea for learning the parameters of the base weights is to generate multiple pseudo-responses $\overline{\mry}$ with randomly generated  response functions $\overline{\mu}(a)\sim P_{\overline{\mu}}$. We will elaborate on the possible choices for the distribution $P_{\overline{\mu}}$ later in this section. Now that we know the true response function $\overline{\mu}(a)$ in the randomly generated data, we can perform causal inference and obtain the estimation of the known response curve $\widehat{\mu}(a)$ using our weights. Algorithm \ref{alg:nn} outlines our stochastic training of the log-base-weight function. We also schematically visualize the algorithm in Figure \ref{fig:diagram}. In Step \ref{line:noise}, we estimate the distribution of noise using the residuals of regressing $\mry$ over $(\mra, \bfx)$, capturing the possible heteroskedasticity in the noise.  Then, in each iteration, we draw a batch of possible datasets. To generate each dataset, we randomly choose a response function $\overline{\mu}(a)$ and use it to generate the entire dataset (see Section \ref{sec:synth} for examples of random functions). For the entire batch, we use $\ell_{\bm{\theta}}$ to learn the log-base-weights, and subsequently learn the weights in lines \ref{line:dual}--\ref{line:softmax}. In line \ref{line:regress} we use a weighted regression algorithm to find our estimation $\widehat{\mu}(a)$ of the randomly-generated $\overline{\mu}(a)$. Our loss function is the mean squared error between the latter quantities. Our algorithm, which we call \textit{End-to-End Balancing} (E2B),  gets its name from this end-to-end optimization.

\begin{figure}
    \centering
    \includegraphics[width=0.45\textwidth]{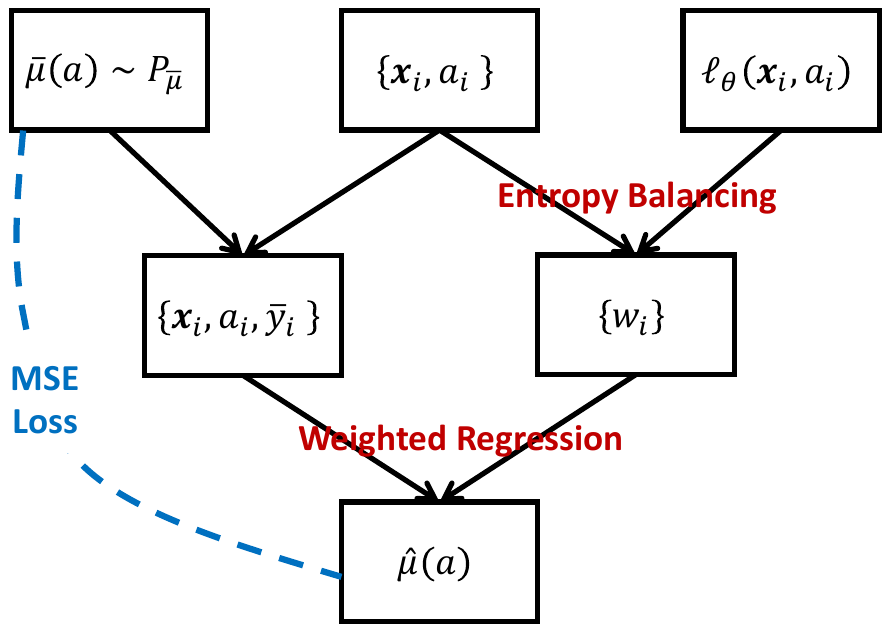}
    \caption{Overview of the steps in End-to-End Balancing.}
    \label{fig:diagram}
\end{figure}

\paragraph{Sample Splitting.} The E2B procedure involves estimation of two sets of parameters $\bm{\theta}$ in the $\ell_{\bm{\theta}}$ and $\bm{\lambda}$ for entropy balancing. The joint estimation of $\bm{\theta}, \bm{\lambda}$ on a single sample will result in bias \citep{chernozhukov2018double}. Thus, we split the sample to two mutually exclusive parts and perform the optimizations on separate partitions of data.

\paragraph{Distribution of Random Response Functions $\bm{P_{\overline{\mu}}}$.} Ideally, we should rely on domain experts or rough ideas from past studies for choosing the random set of response functions $\overline{\mu}(a)$ that includes the true response function. This will help us embed our prior knowledge in the base weights and improve the accuracy of causal inference.
Alternatively, we can choose broad function classes such as random piecewise smooth functions or polynomial functions with random coefficients. See Section \ref{sec:exp} for examples of generating random response functions. We can also use generative models to generate data that is more similar to our sample \citep{athey2021using}. 

\paragraph{Features Fed to $\ell_{\bm{\theta}}$.} We can feed the raw values of the treatments and any handcrafted features, denoted by $\bm{\psi}(a_i, \bm{x}_i)$. We empirically find that $\bm{\psi}(a_i, \bm{x}_i) = (\log p(a_i), \log p(a_i|\bmx_i))$ makes training the $\ell_{\bm{\theta}}$ easier. These features are the logarithms of the numerator and denominator of the stable weights. Given this choice, we can visualize the $\ell_{\bm{\theta}}$ function and find its relationship to the marginal and conditional distributions of the treatment. We describe the details of our neural network model for $\ell_{\bm{\theta}}$ and our techniques for training in Appendix \ref{sec:details}. 

\paragraph{Weighted Regression Algorithms.} To be able to differentiate the loss function with respect to $\bm{\theta}$, we need weighted regression algorithms whose estimates are differentiable with respect to the weights. In the linear average treatment effect function we choose weighted linear regression and in the non-linear setting we use the weighted polynomial regression and the local kernel regression, as used by \citet{flores2012estimating}.

\paragraph{Double Robustness.} \citet{zhao2016entropy} show that, in the binary case, entropy balancing is doubly robust. We do not attempt to show double robustness for E2B because we see E2B as a meta algorithm that learns customized weights for each dataset and algorithm. We can either (1) plug-in the E2B weights in the doubly robust algorithm and expect improved accuracy, or (2) learn weights that directly minimize the error of doubly robust algorithms such as \citep{diaz2013targeted,kennedy2017nonparametric}. However, if we use an outcome regression model for generating the random response functions $\overline{\mu}(a)$, the E2B weights may no longer provide significant improvements to the doubly robust techniques.

\section{Analysis}
\label{sec:analysis}
We prove that for any arbitrary choice of the log-base-weight function $\ell_{\bm{\theta}}$, our approach consistently estimates causal effects. Before proving the consistency results, we characterize the quantity that our solution converges to. 

\begin{definition} \textbf{Generalized Stable Weights.} Suppose $f(a,\bmx)$ denote the joint probability density function of treatments and confounders in a population. Suppose $\widetilde{f}(a)$ and $\widetilde{f}(\bmx)$ denote two arbitrary density functions, possibly different from the marginal density functions in our population, that satisfy $\ex_{\bfx \sim \widetilde{f}(\bmx)}[\bfx]=\bm{0}$ and $\ex_{\mra \sim \widetilde{f}(a)}[\mra]=0$.  We define the Generalized Stable Weights as follows
\begin{equation}
    w_{GSW}(a, \bmx) = \frac{\widetilde{f}(a)\widetilde{f}(\bmx)}{f(a, \bmx)}.
    \label{eq:def}
\end{equation}
\end{definition}
\paragraph{Remark.} Our definition generalizes the stabilized weights defined by \citet{robins2000marginal} in that their weights require $\widetilde{f}(a)$ and $\widetilde{f}(\bmx)$ match the marginal probability density functions in the original population, whereas GSW allows these densities to be less constrained. The next proposition motivates the generalization.

\begin{proposition}
\label{lem:gsw}
The generalized stable weights $w_{GSW}$ satisfy $\ex\left[w_{GSW}\mra\bfx\right]= \mathbf{0}$.
\label{prop:gsw}
\end{proposition}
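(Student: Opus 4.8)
The plan is to compute the expectation directly by unfolding the definition of $w_{GSW}$ and exploiting the cancellation of the population density. The key observation is that the expectation in the claim is taken with respect to the population joint law $f(a,\bmx)$, whereas the weight carries $f(a,\bmx)$ in its \emph{denominator}; multiplying the two cancels the population density entirely and leaves an integral against the factorized product measure $\widetilde{f}(a)\widetilde{f}(\bmx)$. In this sense the whole statement is a change-of-measure identity rather than a substantive computation.

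First I would write the expectation as a double integral over $\seta \times \setx$ and substitute Eq.~\eqref{eq:def}:
\[
\ex\left[w_{GSW}\,\mra\,\bfx\right]
= \int\!\!\int \frac{\widetilde{f}(a)\widetilde{f}(\bmx)}{f(a,\bmx)}\, a\,\bmx\, f(a,\bmx)\, da\, d\bmx .
\]
Here the positivity assumption $0 < P(\mra\,|\,\bfx) < 1$ guarantees $f(a,\bmx) > 0$ on the support, so the ratio defining $w_{GSW}$ is well-defined and the two copies of $f(a,\bmx)$ cancel cleanly, reducing the right-hand side to $\int\!\int a\,\bmx\,\widetilde{f}(a)\widetilde{f}(\bmx)\, da\, d\bmx$.

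Next, since the integrand factors as a scalar function of $a$ times a vector function of $\bmx$ and the measure factors as $\widetilde{f}(a)\widetilde{f}(\bmx)$, I would invoke Fubini's theorem (granting the relevant integrability) to separate the double integral into a product:
\[
\Bigl(\int a\,\widetilde{f}(a)\, da\Bigr)\Bigl(\int \bmx\,\widetilde{f}(\bmx)\, d\bmx\Bigr)
= \ex_{\mra\sim\widetilde{f}(a)}[\mra]\;\cdot\;\ex_{\bfx\sim\widetilde{f}(\bmx)}[\bfx].
\]
By the mean-zero hypotheses on $\widetilde{f}(a)$ and $\widetilde{f}(\bmx)$ that are built into the definition of the generalized stable weights, each factor vanishes; indeed either factor being zero already forces the product to be $\bm{0}$, which is exactly the claim.

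I do not expect a genuine obstacle here, as the argument is a one-line cancellation once the measures are tracked correctly. The only points that deserve explicit care are: (i) stating clearly that the outer expectation is taken under the population density $f$, not under the tilde measures, so that the cancellation of $f(a,\bmx)$ is legitimate; and (ii) citing positivity to ensure the weight is well-defined and Fubini to justify factorizing the double integral into the product of the two marginal means.
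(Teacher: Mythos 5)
Your proof is correct and follows essentially the same route as the paper's own: expand the expectation as an integral against $f(a,\bmx)$, cancel it with the denominator of $w_{GSW}$, factor the resulting integral into the product $\bigl(\int a\,\widetilde{f}(a)\, \mathrm{d}a\bigr)\bigl(\int \bmx\,\widetilde{f}(\bmx)\, \mathrm{d}\bmx\bigr)$, and invoke the mean-zero assumptions. Your explicit appeals to positivity and Fubini are points the paper leaves implicit, but the argument is the same.
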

\begin{proof}
\begin{align*}
    \ex\left[\mrw_{GSW}\mra\bfx\right] &= \ex\left[\frac{\widetilde{f}(\mra)\widetilde{f}(\bfx)}{f(\mra, \bfx)}\mra\bfx\right] \\
    &= \int\int\frac{\widetilde{f}(a)\widetilde{f}(\bmx)}{f(a, \bmx)}a\bmx \ \mathrm{d}F_{\mra,\bfx}(a, \bmx) \\
    & = \int{a\widetilde{f}(a)\mathrm{d}a} \int{\bmx\widetilde{f}(\bmx) \ \mathrm{d}\bmx} = \mathbf{0},
\end{align*}
where the last equation holds due to the zero mean assumption for the $\widetilde{f}(a)$ and $\widetilde{f}(\bmx)$ distributions.
\end{proof}

Now, we can show that with an appropriate choice of the $\bm{\phi}$ functions, the solution of Eq. (\ref{eq:dual}) approximates the generalized stable weights. Consider the population version of Eq. (\ref{eq:dual}):
\begin{equation}
    \bm{\lambda}^\star = \argmin_{\bm{\lambda}} \log\left( \ex\left[\exp(\bxg^\top\bm{\lambda} + \ell)\right]\right).
\label{eq:pop}
\end{equation}
The weights corresponding to $\bm{\lambda}^\star$ can be calculated as $w^\star = C\exp(\bxg^\top\bm{\lambda}^\star + \ell)$, where $C=\left(\int \exp(\bxg^\top\bm{\lambda}^\star + \ell) \ \mathrm{d}F(a, \bmx)\right)^{-1}$ is the normalization constant.
\paragraph{Assumptions.}
\begin{enumerate}
    \item $f(a, \bmx) \geq c >0$ for all $(a, \bmx) \in \seta \times \setx$ pairs, where $c$ is a constant.
    \item Suppose the basis functions are dense and rich enough such for some small values of $\delta_{\bm{\phi}_K}$ that they satisfy:
\begin{equation*}
    \mathbb{E}[\mathrm{a}\bm{\phi}(\mathbf{x})] = \bm{0}  \; \text{only if} \; \sup_{a, \bm{x}}\left|f(a, \bm{x})-f(a)f(\bm{x}) \right| = \delta_{\bm{\phi}_K}.
\end{equation*}
\item Suppose the population problem in Eq. (\ref{eq:pop}) has a unique solution $\bm{\lambda}^{\star}$ and the corresponding weights are denoted by $w^{\star}$.
\end{enumerate}
The following theorem shows that the solution to Eq. (\ref{eq:pop}) converges to $w_{GSW}$:
\begin{theorem} Given the assumptions, the solution to the population problem satisfies:
\begin{equation}
    \sup_{a, \bmx}\left| w^{\star}(a, \bmx) - w_{GSW}(a, \bmx) \right| \leq \delta_{\bm{\phi}_K}/c.
\end{equation}
\label{thm:bias}
\end{theorem}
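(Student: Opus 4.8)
The plan is to exploit the first-order optimality conditions of the population problem in Eq.~\eqref{eq:pop} to recover the balancing constraints, reinterpret the optimal weights as defining a genuine \emph{weighted} density, and then invoke Assumption~2 to convert the balancing constraints into an approximate-independence statement that directly controls the target quantity.

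First I would differentiate the convex objective $\log(\ex[\exp(\bxg^\top\bm{\lambda}+\ell)])$ in $\bm{\lambda}$ and set the gradient to zero. Because the denominator is strictly positive, the stationarity condition collapses to $\ex[\bxg\exp(\bxg^\top\bm{\lambda}^\star+\ell)]=\bm{0}$, and multiplying through by the normalizer $C$ yields $\ex[w^\star\bxg]=\bm{0}$. Since $\bxg=[\bm{\phi}(\bmx),a,a\bm{\phi}(\bmx)]$, this single vector identity encodes exactly the three balancing constraints, and in particular the cross term $\ex[w^\star a\bm{\phi}(\bmx)]=\bm{0}$. This matches the finite-sample optimality condition already written in the speed-up paragraph.

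Next I would set $\widetilde{f}^\star(a,\bmx):=w^\star(a,\bmx)f(a,\bmx)$. The choice of $C$ makes $\int w^\star\,\mathrm{d}F=1$, so $\widetilde{f}^\star$ is a bona fide probability density, and the balancing identities become moment conditions under $\widetilde{f}^\star$: its $a$-mean and $\bm{\phi}$-mean vanish, and the weighted correlation $\ex_{\widetilde{f}^\star}[a\bm{\phi}(\bmx)]$ is zero. Applying Assumption~2 to $\widetilde{f}^\star$ then gives $\sup_{a,\bmx}|\widetilde{f}^\star(a,\bmx)-\widetilde{f}^\star(a)\widetilde{f}^\star(\bmx)|\le\delta_{\bm{\phi}_K}$. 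Taking the arbitrary densities in the GSW definition to be the marginals induced by the optimal weights, i.e.\ $\widetilde{f}(a)=\widetilde{f}^\star(a)$ and $\widetilde{f}(\bmx)=\widetilde{f}^\star(\bmx)$---which is legitimate precisely because the vanishing moments supply the required zero-mean conditions---we obtain $w_{GSW}=\widetilde{f}^\star(a)\widetilde{f}^\star(\bmx)/f(a,\bmx)$, while by construction $w^\star=\widetilde{f}^\star(a,\bmx)/f(a,\bmx)$. Their difference is therefore a single fraction whose numerator is bounded by $\delta_{\bm{\phi}_K}$ and whose denominator is at least $c$ by Assumption~1, delivering the claimed $\delta_{\bm{\phi}_K}/c$ bound after taking the supremum over $(a,\bmx)$.

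The differentiation and the final division are routine. The step I expect to carry the real weight is the translation through Assumption~2: one must read the balancing constraint in the \emph{weighted} population $\widetilde{f}^\star$ rather than in the original $f$, and interpret Assumption~2 as genuinely bounding the departure from independence of whatever density satisfies $\ex[a\bm{\phi}(\bmx)]=\bm{0}$. Recognizing that $\widetilde{f}(a)$ and $\widetilde{f}(\bmx)$ in the GSW definition are free to be chosen as the optimal weighted marginals is what makes the two weight functions line up so that only the independence gap $\widetilde{f}^\star(a,\bmx)-\widetilde{f}^\star(a)\widetilde{f}^\star(\bmx)$ survives in the comparison.
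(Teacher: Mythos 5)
Your proposal is correct and follows essentially the same route as the paper's own proof: first-order optimality of the population dual gives $\ex[w^\star \mra\bm{\phi}(\bfx)]=\bm{0}$, the optimal weights are read as the density ratio $\widetilde{f}^\star(a,\bmx)/f(a,\bmx)$ of the weighted population (the paper phrases this via the Radon--Nikodym derivative), and the difference $w^\star - w_{GSW}$ reduces to the independence gap of $\widetilde{f}^\star$ divided by $f$, bounded by $\delta_{\bm{\phi}_K}/c$ under Assumptions 1--2. Your explicit remark that Assumption 2 must be applied to the \emph{weighted} density, and that the GSW marginals are chosen as the weighted population's own marginals, is exactly the (implicit) content of the paper's decomposition step.
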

If we select the function set $\bm{\phi}_K$ such that $\delta_{\bm{\phi}_K} = o(1)$, the theorem shows that $w^{\star}(a, \bmx)$ is an unbiased estimator of $w_{GSW}(a, \bmx)$.
Notice that Assumption 1 is only slightly stronger than the common positivity assumption. Assumption 2 requires us to select the mapping functions such that zero correlation between the mapped confounders and the treatment implies their independence, approximately. We provide the proof in Appendix \ref{sec:proof1}.

Note that the quality of the $\bm{\psi}$ features and neural network training does not affect the unbiasedness of the E2B because of the balancing constraint is still satisfied. 
The flexibility in choice of $\widetilde{f}$ distributions in the definition of $w_{GSW}$ is due to the fact that we require only first order balancing, enforcing zero correlation between the first order moments of the treatment and confounders. If we enforce higher order balancing constraints in the form of $\mathbb{E}[\mathrm{w}^{\star}\phi_1(\mathrm{a})\phi_2(\bfx)] = \mathbb{E}[\mathrm{w}^{\star}\phi_1(\mathrm{a})]\,\cdot\,\mathbb{E}[\mathrm{w}^{\star}\phi_2(\bfx)]$ for any suitable functions $\phi_1$ and $\phi_2$,  Theorem 1 in \citep{ai2021estimation} shows that $w^{\star} = \sfrac{f(a)}{f(a|\bmx)}$. The more flexible form of weights in Eq. (\ref{eq:def}) allows us to pick the marginals $\widetilde{f}(a)$ and $\widetilde{f}(\bmx)$ with more freedom. In this work, we have chosen a data-driven way to learn them.

Finally, the following theorem establishes the asymptotic consistency and normality result for each individual weight estimated by E2B, under the common regularity conditions for problem (\ref{eq:dual}).
\begin{theorem} Suppose $\Lambda \subset \mathbb{R}^{2K+1}$ is an open subset of Euclidean space and the solution $\widehat{\bm{\lambda}}_n \in \Lambda$ to Eq. \eqref{eq:dual} is within the subset. The weights estimated by Eq. \eqref{eq:dual} are asymptotically normal for $i=1, \ldots, n$:
\begin{align}
\sqrt{n}\left(\,\widehat{w}_n(a_i, \bmx_i) - w^{\star}(a_i, \bmx_i)\,\right) ~ &\overset{d}{\to} ~ \mathcal{N}(0, \sigma^2(a_i, \bmx_i)).\label{eq:cons_w}
\end{align}
We provide the population form of $\sigma^2(a_i, \bmx_i)$ and an unbiased sample estimate for  it in  Appendix \ref{sec:proof2}.
\label{thm:cons}
\end{theorem}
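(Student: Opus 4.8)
The plan is to recognize $\widehat{\bm{\lambda}}_n$ as an M-estimator (equivalently, a Z-estimator) of the smooth strictly convex population objective in Eq.~\eqref{eq:pop}, establish its $\sqrt{n}$-asymptotic normality by the usual sandwich expansion, and then push this through the smooth softmax map by the delta method. Adopting the sign convention of Eq.~\eqref{eq:pop}, write the empirical objective as $M_n(\bm{\lambda}) = \log\!\big(\tfrac{1}{n}\sum_i \exp(\bxg_i^\top\bm{\lambda}+\ell_i)\big)$ and its population limit $M(\bm{\lambda}) = \log\!\big(\ex[\exp(\bxg^\top\bm{\lambda}+\ell)]\big)$; both are cumulant-generating-type functions and hence strictly convex, so by Assumption~3 the minimizer $\bm{\lambda}^\star$ is unique. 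Because the sample-splitting of Lines~\ref{line:dual} and~\ref{line:loss} estimates $\ell_{\bm{\theta}}$ on a separate half of the data, I may treat $\ell$ as a fixed exogenous function when analyzing $\widehat{\bm{\lambda}}_n$ on the other half, which removes the extra dependence that the learned base weights would otherwise introduce.

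First I would establish consistency $\widehat{\bm{\lambda}}_n \overset{p}{\to} \bm{\lambda}^\star$ via the standard argmin theorem: under a domination condition making $\exp(\bxg^\top\bm{\lambda}+\ell)$ integrable in a neighborhood of $\bm{\lambda}^\star$, $M_n$ converges uniformly to $M$ on compacta, $M$ has a well-separated unique minimum, and $\widehat{\bm{\lambda}}_n$ lies in the open set $\Lambda$. Next I would derive normality for $\bm{\lambda}$. The estimator solves the score equation $\Psi_n(\widehat{\bm{\lambda}}_n)=\bm{0}$, where $\Psi_n(\bm{\lambda})=\nabla M_n(\bm{\lambda})$ is exactly the weight-weighted average of the $\bxg_i$ (so the first-order condition is constraint (i), $\bm{G}\bm{w}=\bm{0}$); its population value $\Psi(\bm{\lambda}^\star)=\ex[w^\star\bxg]$ vanishes by the balancing identity of Proposition~\ref{prop:gsw}. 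Linearizing the self-normalized ratio $\Psi_n$ about $\bm{\lambda}^\star$ and applying the CLT to $\sqrt{n}\,\Psi_n(\bm{\lambda}^\star)$ gives
\begin{equation*}
\sqrt{n}\big(\widehat{\bm{\lambda}}_n - \bm{\lambda}^\star\big) = -H^{-1}\sqrt{n}\,\Psi_n(\bm{\lambda}^\star) + o_p(1) \overset{d}{\to} \mathcal{N}\!\big(\bm{0},\, H^{-1} V H^{-1}\big),
\end{equation*}
where $H=\nabla\Psi(\bm{\lambda}^\star)=\mathrm{Cov}_{w^\star}(\bxg)$ is the positive-definite Hessian of $M$ and $V$ is the asymptotic variance of the empirical score.

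Finally, I would apply the delta method to the weight map. Writing $\widehat{w}_n(a_i,\bmx_i)=\exp(\bxg_i^\top\widehat{\bm{\lambda}}_n+\ell_i)/B_n(\widehat{\bm{\lambda}}_n)$ with $B_n(\bm{\lambda})\overset{p}{\to}B(\bm{\lambda})=\ex[\exp(\bxg^\top\bm{\lambda}+\ell)]$, the map $\bm{\lambda}\mapsto w^\star(a_i,\bmx_i;\bm{\lambda})$ is smooth with gradient $w^\star(a_i,\bmx_i)\big(\bxg_i-\ex[w^\star\bxg]\big)=w^\star(a_i,\bmx_i)\,\bxg_i$, again using $\ex[w^\star\bxg]=\bm{0}$. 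Composing with the previous display yields Eq.~\eqref{eq:cons_w} with $\sigma^2(a_i,\bmx_i)=w^\star(a_i,\bmx_i)^2\,\bxg_i^\top H^{-1} V H^{-1}\,\bxg_i$, and the sample estimate replaces $H$, $V$, $w^\star$, and $\bxg_i$ by their empirical analogs at $\widehat{\bm{\lambda}}_n$. I expect the main obstacle to be the regularity bookkeeping rather than the structure: verifying the domination and integrability conditions uniformly in $\bm{\lambda}$ near $\bm{\lambda}^\star$, confirming nonsingularity of $H$ (i.e.\ that the $w^\star$-weighted features $\bxg$ are not collinear, which is precisely what makes problem~\eqref{eq:dual} well-conditioned), and linearizing the self-normalized ratio so that fluctuations of the denominator $B_n$ are correctly accounted for. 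A secondary point is the normalization mismatch between the softmax weights, which sum to one, and the mean-one population weights $w^\star$, reconciled by the factor $B_n$ in the denominator of $\widehat{w}_n$.
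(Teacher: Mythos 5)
Your proposal is correct and follows essentially the same route as the paper's own proof: treat $\widehat{\bm{\lambda}}_n$ as an M-/Z-estimator, obtain $\sqrt{n}$-normality with the sandwich variance $H^{-1}VH^{-1}$ (the paper's $\ex[\dot{\bm{\psi}}_{\bm{\lambda}^{\star}}]^{-1} \ex[\bm{\psi}_{\bm{\lambda}^{\star}}\bm{\psi}_{\bm{\lambda}^{\star}}^{\top}] \ex[\dot{\bm{\psi}}_{\bm{\lambda}^{\star}}]^{-1}$), and push through the self-normalized softmax map by the delta method. Your only refinements beyond the paper are cosmetic or additive: you keep the logarithm where the paper drops it, you add an explicit consistency step and the sample-splitting justification for treating $\ell$ as fixed, and you evaluate the delta-method gradient in closed form as $w^{\star}(a_i,\bmx_i)\,\bxg_i$ (using $\ex[w^{\star}\bxg]=\bm{0}$), which makes the paper's abstract $\sigma^2(a_i,\bmx_i)=\nabla s(\bm{\lambda}^{\star})^{\top}\bm{V}\nabla s(\bm{\lambda}^{\star})$ concrete.
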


\section{Experiments}
\label{sec:exp}
We use two synthetic and one real-world datasets to evaluate the performance of E2B. In the synthetic datasets, we have access to the true treatment effects; thus we measure accuracy of the algorithms in recovering the treatment effects. In the real-world data, we qualitatively evaluate the estimated causal treatment effect curve and inspect the learned log-base-weight function.

\paragraph{Baselines.} One baseline in our study is the Inverse Propensity score Weighting (IPW) with Stable Weights \citep{robins2000marginal}, perhaps the most commonly used technique. To avoid extreme weights and prevent instability, we trim (Winsorize) the weights by $[5, 95]$ percentiles \citep{cole2008constructing,crump2009dealing,chernozhukov2018double}. Another baseline in our experiments is Entropy Balancing \citep{vegetabile2021nonparametric}, which is equal to E2B with $\ell_{\bm{\theta}}=\mathrm{constant}$. EB allows us to do an ablation study and see the exact amount of improvement when learning a customized $\ell_{\bm{\theta}}$ function. We also include EB with the stabilized weights (SW) as base weights ($\ell_{\bm{\theta}}=\log \widehat{p}(a) - \log  \widehat{p}(a|\bmx)$). For fair comparison to the entropy balancing methods, we only include first order balancing methods, as our idea of learning the base-weights can be combined with the higher order and kernel-based balancing methods.
 Finally, we also include the permutation weighting algorithm \citep{pmlr-v139-arbour21a} that proposes to compute the weights using permutations of the treatments and a classifier that predicts the probability of being permuted. We provide further details on this algorithm in Appendix \ref{sec:perm}. 

\begin{table*}[t]
    \centering
    \caption{Average RMSE for estimation of the response functions. The results are in the form of ``mean (std. err.)'' from 100 runs.}
    \label{tab:synthetic}
    \begin{tabular}{l|c|c}
    Algorithm & Linear & Non-linear \\
    \toprule
    Inverse Propensity Weighting (SW)& $2.057~(0.437)$ & $0.530 ~ (0.025)$\\
    Permutation Weighting&
    $1.1543~(6.580)$ & $0.525~(0.250)$\\
    Entropy Balancing  (Const.)   & $0.626 ~(0.041)$ & $0.417 ~ (0.024)$\\
    Entropy Balancing  (SW)   & $0.412 ~ (0.029)$ & $0.399 ~(0.021)$\\
    End-to-End Balancing     & $\mathbf{0.299 ~(0.029)}$ & $\mathbf{0.249 ~(0.011)}$\\
    \bottomrule
    \end{tabular}
\end{table*}

\begin{figure*}[t]
    \centering
    \begin{subfigure}[t]{0.48\textwidth}
    \centering
    \includegraphics[width=\textwidth]{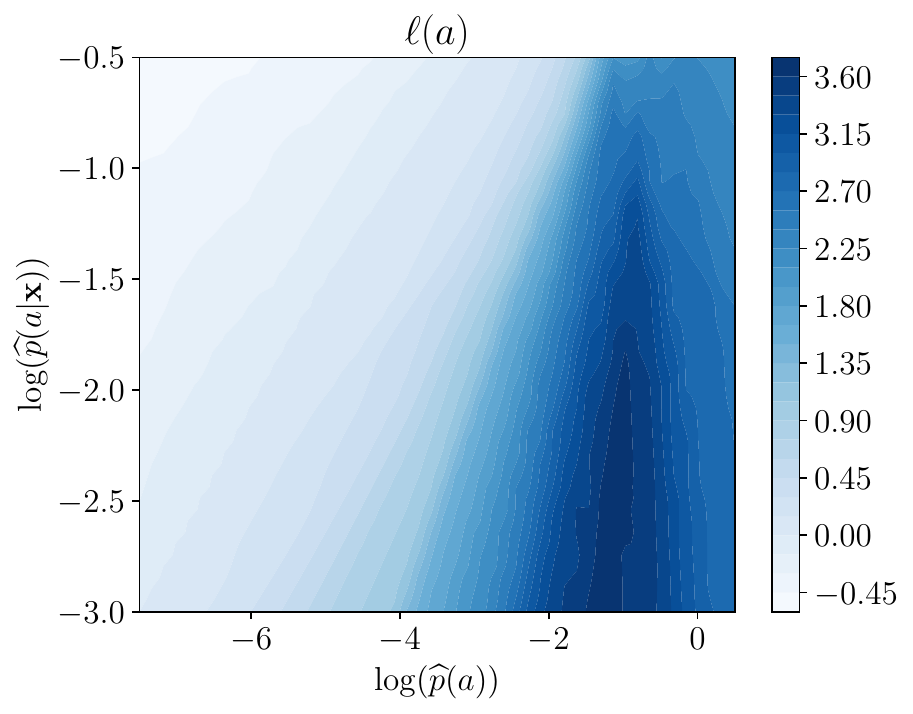}
    \caption{Linear Design}
    \label{fig:synthetic_linear}
    \end{subfigure}
    \quad
    \begin{subfigure}[t]{0.48\textwidth}
    \centering
    \includegraphics[width=\textwidth]{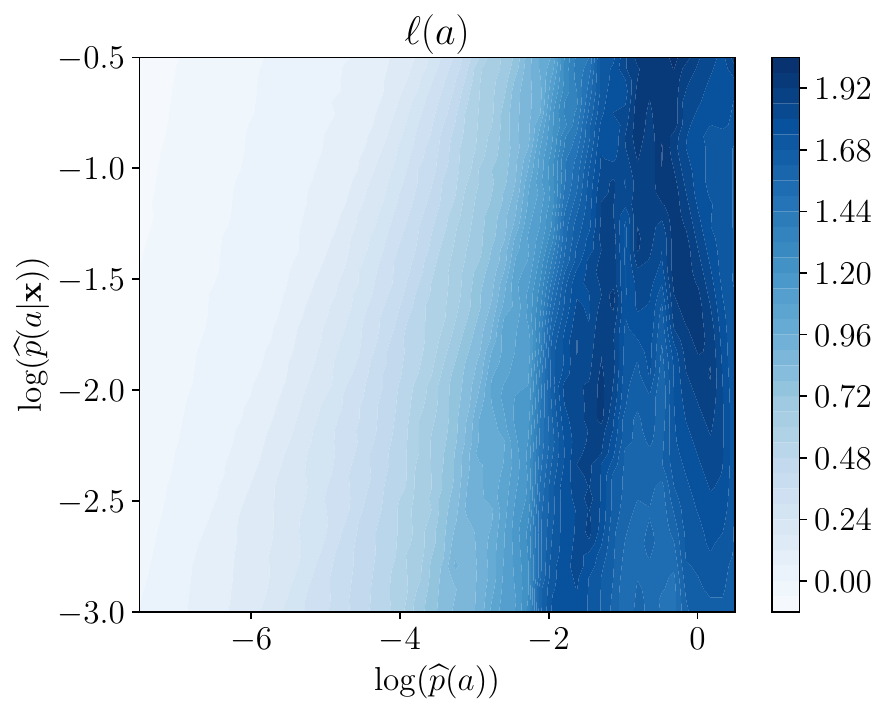}
    \caption{Non-linear Design}
    \label{fig:synthetic_nonlinear}
    \end{subfigure}
    \caption{The estimated log-base-weight function $\ell_{\bm{\theta}}$ as a function of logarithms of the empirical density of the treatment $\log(\widehat{p}(a))$ and conditional distribution $\log(\widehat{p}(a|\bfx))$. We perform the experiment 100 times and report the median and the inter-quantile range. We align all curves by normalizing their value at the beginning to zero.}
    \label{fig:log-base}
\end{figure*}

\paragraph{Training Details.} All neural networks are trained using Adam \citep{kingma2014adam} with early stopping based on validation error. The learning rate and architectural parameters of the neural networks are tuned via hyperparameter search on the validation data.  We provide the details of the neural networks used for the $\ell_{\bm{\theta}}$ and propensity score estimation for IPW in Appendix \ref{sec:details}. 

\paragraph{Regularization.} Proper tuning of the regularization term $\gamma$ in Line \ref{line:dual} of Algorithm \ref{alg:nn} is critical to the performance of EB and E2B. Our first observation is that using $L_2$ regularization allows more stable tuning of the regularization coefficient $\gamma$, though the final accuracies do not significantly change. We tune the regularization coefficient for EB by measuring the bootstrapped covariate balance, as described by \citet{wang2020minimal}. However, for E2B, we have access to the validation error on the pseudo-datasets and use the validation error to tune $\gamma$. In the experiments, we observe that tuning with validation error leads to smaller values for the penalization coefficient $\gamma$ and more accurate causal inference.

\subsection{Synthetic Data Experiments}
\label{sec:synth}
\paragraph{Linear.}
We use the following steps to generate 100 datasets, each with 1000 data points.
\begin{enumerate}
    \item Generate confounders $\bfx \in \mathbb{R}^{5}$, $\bfx \sim \mathcal{N}(\bm{0}, \bm{\Sigma})$, where $\bm{\Sigma}$ is a tridiagonal covariance matrix with diagonal and off-diagonal elements equal to $1.0$ and $0.2$, respectively.
    \item $\mra \sim \mathcal{N}(\mu_a, 0.3^2)$, where $\mu_a = \sin(\bm{\beta}_{xa}^{\top}\bfx)$ and $\beta_{xa, k} \sim \mathrm{Unif}(-1, 1)$ for $k=1, \ldots, 5$.
    \item $\mry \sim \mathcal{N}(\mu_y, 0.5^2)$, where $\mu_y = \bm{\beta}_{xy}^{\top}\bfx + \beta_{ay}\mra$, where $\beta_{ax}, \beta_{xy, k} \sim \mathcal{N}(0, 1)$ for $k=1, \ldots, 5$.
\end{enumerate}
We use weighted least squares as the regression algorithm and report the average $|\widehat{\beta_{ay}} - \beta_{ay}|$ over all 100 datasets.

\paragraph{Nonlinear}
We first generate confounders $\bfx$ and treatments $\mra$ similar to steps 1 and 2 of the linear case. Then, we generate the response variable according to $\mry \sim \mathcal{N}(\mu_y, 0.5^2)$, where $\mu_y = \bm{\beta}_{xy}^{\top}\bfx + h_{\bm{\gamma}_{ay}}(\mra)$, where $\beta_{xy, k} \sim \mathcal{N}(0, 1)$ for $k=1, \ldots, 5$, and $h_{\bm{\gamma}}(z) = \gamma_0 + \gamma_1 z + \gamma_2 (z^2-1) + \gamma_3 (x^3-3x)$ are Hermit polynomials.  Similar to the linear case, we generate 100 samples of size 1000. We use the weighted polynomial regression as the regression algorithm to estimate $\widehat{\bm{\gamma}}$ and report the average RMSE between true $\bm{\gamma}$ and $\widehat{\bm{\gamma}}$. We report the mean and standard error of errors on 100 datasets in Table \ref{tab:synthetic}.

\begin{figure*}[t]
    \centering
    \begin{subfigure}[t]{0.48\textwidth}
    \centering
    \includegraphics[width=\textwidth]{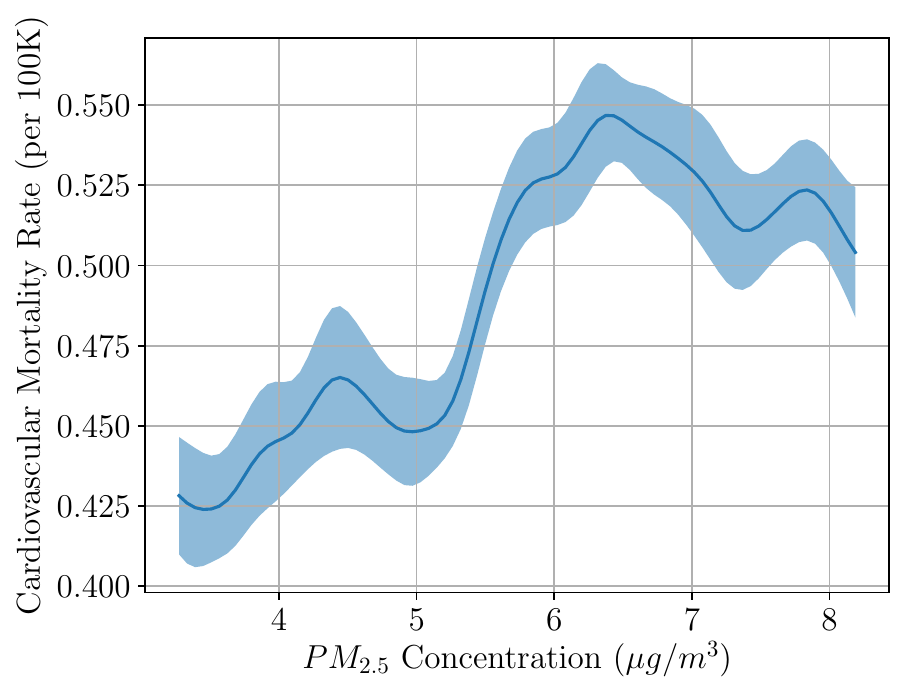}
    \caption{Response Curve}
    \label{fig:nsaph_curve}
    \end{subfigure}
    \quad
    \begin{subfigure}[t]{0.48\textwidth}
    \centering
    \includegraphics[width=\textwidth]{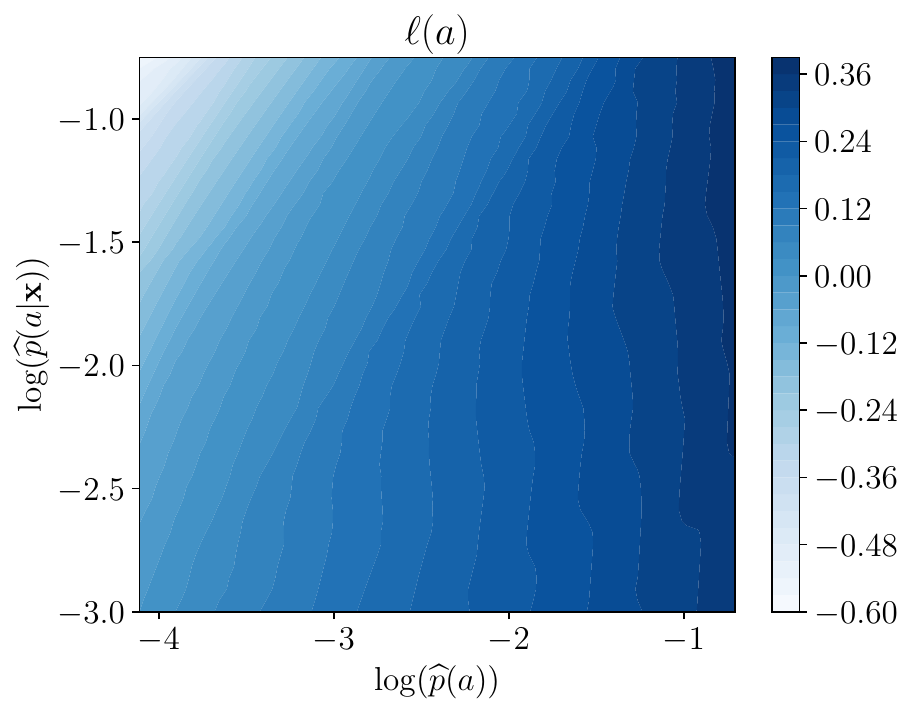}
    \caption{Log-base-weights function}
    \label{fig:nsaph_la}
    \end{subfigure}
    \caption{(\subref{fig:nsaph_curve}) The average treatment effect curve for measuring the impact of $PM_{2.5}$ concentration on the cardiovascular mortality rate. We perform the experiment 100 times and report the mean and $\pm$std range. (\subref{fig:nsaph_la}) The estimated log-base-weight function $\ell_{\bm{\theta}}$ as a function of logarithm of the empirical density of the treatment $\log(\widehat{p}(a))$.  }
    \label{fig:nsaph}
\end{figure*}

As shown in Table \ref{tab:synthetic}, in both linear and non-linear datasets, the E2B is significantly more accurate in uncovering the true treatment response functions. Both constant and IPW base weights perform worse than the base-weights learned by end-to-end balancing.  Comparing EB and E2B, we see that the optimal penalization coefficients that we find are $100$ and $16$, respectively. This indicates that hyperparameter tuning with validation loss ---available only to E2B--- leads to smaller penalization coefficients. This decreased penalization produces stronger covariate balance for the E2B, compared to EB, and thus contributes to the higher accuracy of E2B. 

To gain more insights, in Figure \ref{fig:log-base}, we plot the log-base weight function that we learn as a function of $\log(\widehat{p}(a))$ and $\log(\widehat{p}(a|\bfx))$. We align all curves at their starting point and plot the median of 100 runs. Looking at the y-axis, especially in the linear case, the smaller conditional probability leads to larger base weights. This in effect regularizes the weights to be closer to $\widehat{p}(a|\bfx)$, with a scaling factor. 
Both figures, show variations in the $\log(\widehat{p}(a))$--axis.  This means that the algorithm finds the high-density regions to be more trustworthy for causal inference. Finally, the complexity of the plots emphasizes the need for end-to-end methods for learning weights. Given the results in Figure \ref{fig:log-base}, the learned log-base-weight does not seem to be a convex combination of uniform and SW base weights.

Note that, as \citet{robins2007comment} caution, synthetic data evaluation might exacerbate the extreme weights issue because, unlike when analyzing real data, usually no manual inspection of weights is done.

\paragraph{Mis-specification of Random Response Functions.} To assess the sensitivity of the E2B to the choice of random response functions $\overline{\mu}(a)$, we perform an additional experiment in which the class of random response functions does not include the true response function. In our experiment, the true response function is a nonlinear function (a Hermite polynomial of degree 3) but we use linear random response functions $\overline{\mu}(a)$ for training $\ell_{\bm{\theta}}$. We generate 100 datasets with different randomly selected response functions and report the mean and standard error of RMSE in estimation of the response functions. We also measure the validation error on the datasets generated using the linear response functions.

In our experiments with misspecification, the RMSE is $0.254~(0.013)$. It shows that misspecification of the random response function does not significantly impact the accuracy of E2B and robustness of E2B to misspecification. The result also shows that the gain is not due to the prior knowledge used for generation of the random response functions. Note that the hyperparameter tuning selects the same regularization parameter because during validation we validate with the misspecified model.

 
\subsection{Real Data Experiments}
\label{sec:real}


We study the impact of $PM_{2.5}$ particle level on the cardiovascular mortality rate (CMR) in 2132 counties in the US using the data provided by the National Studies on Air Pollution and Health \citep{Rappold2020}. The data is publicly available under U.S. Public Domain license. The $PM_{2.5}$ particle level and the mortality rate are measured by $\mu g/m^3$ and  the number of annual deaths due to cardiovascular conditions per 100,000 people, respectively. We use only the data for 2010 to simplify the experiment setup; thus we measure the same year impact of $PM_{2.5}$ particle level. Other than the treatment and response variables, the data includes 10 variables such as poverty rate, population, and household income, which we use as confounders. We provide the descriptive statistics and the histograms for the treatment and effect in Appendix \ref{sec:data}.

To train E2B, we create the random dataset (Line \ref{line:rand} in Algorithm \ref{alg:nn}) using Hermite polynomials of max degree 3,  $\mu_y = \left|h_{\bm{\gamma}_{xy}}(\bm{\beta}_{xy}^{\top}\bfx/\|\bm{\beta}_{xy}^{\top}\bfx\|_2) + h_{\bm{\gamma}_{ay}}(\mra)\right|$. We use absolute value to capture the positivity of our response variable.  The data also shows heteroskedasticity; we model the noise as a zero mean Gaussian variable (Line \ref{line:noise} of the algorithm) with variance $\sigma^2(\widehat{y}) = 6.00\widehat{y}$. For regression, we use the non-parametric local kernel regression algorithm. 
We measure the uncertainty in the curves using the deep ensembles technique \citep{lakshminarayanan2017simple} with 100 random ensembles. That means, in each experiment, we initialize the neural network with different random values. To further improve the uncertainty estimation, in each training, we resample the dataset as well.

Figure \ref{fig:nsaph_curve} shows the average treatment effect curve for the impact of $PM_{2.5}$ on CMR. We show the one standard deviation interval using the shaded areas. Starting around $PM_{2.5}=5.3 \mu g/m^3$ the curve increases with a steep slope, confirming the previous studies that increased $PM_{2.5}$ levels increase the probability of cardiovascular mortality. 
We can see that after $PM_{2.5}=6.4 \mu g/m^3$ the curve plateaus and mortality rate stays at elevated levels. Looking at the histogram of the treatments in Figure \ref{fig:hist_pm25} in the appendix, we observe that most counties have $PM_{2.5}$ between 6 and 8. This might justify the fluctuations that we see in this interval and may allude to existence of potential unmeasured confounders. 

Figure \ref{fig:nsaph_la} shows the log-base-weight function that we learn in this data. Similar to the synthetic experiments, we show the median of 100 runs. While the plot shows smaller variations, it is generally inline with the observations we had in the synthetic data.

\section{Discussion}
Causal inference is a well-studied problem. Its main goal is to remove biases due to confounding by balancing the population to look similar to randomized controlled trials. Removing the impact of confounders can play a critical role in reducing and possibly eliminating bias in our decision making leading to potentially positive societal impacts. Our results rely on two classical assumptions: (1) no unmeasured confounders and (2) positivity. While these assumptions are sometimes reasonable in practice, their violation might lead to biased causal inferences. For example, the positivity assumption might be violated if we do not collect any data for a sub-population. Overall, the debiasing property of causal inference should not relieve us from rigorous data collection and analysis. In our experiments, we have been careful to quantify uncertainty in our causal estimation and be wary of over-confidence in our results. We performed our experiments on a CPU machine with 16 cores on AWS, in a region that uses hydroelectric power.

\section{Conclusion}
We observe that, in the entropy balancing framework, the base weights provide an extra degree of freedom to optimize the accuracy of causal inference. We propose end-to-end balancing (E2B) as a technique to learn the base weights such that they directly improve the accuracy of causal inference using end-to-end optimization. In our theoretical analysis we find that E2B weights are approximating Generalized Stable Weights and discuss E2B's statistical consistency. Using synthetic and real-world data, we show that our proposed algorithm outperforms entropy balancing in terms of causal inference accuracy.


\bibliography{references}

\begin{thebibliography}{40}
\providecommand{\natexlab}[1]{#1}
\providecommand{\url}[1]{\texttt{#1}}
\expandafter\ifx\csname urlstyle\endcsname\relax
  \providecommand{\doi}[1]{doi: #1}\else
  \providecommand{\doi}{doi: \begingroup \urlstyle{rm}\Url}\fi

\bibitem[Ai et~al.(2021)Ai, Linton, and Zhang]{ai2021estimation}
Ai, C., Linton, O., and Zhang, Z.
\newblock Estimation and inference for the counterfactual distribution and
  quantile functions in continuous treatment models.
\newblock \emph{Journal of Econometrics}, 2021.

\bibitem[Arbour et~al.(2021)Arbour, Dimmery, and Sondhi]{pmlr-v139-arbour21a}
Arbour, D., Dimmery, D., and Sondhi, A.
\newblock Permutation weighting.
\newblock In \emph{ICML}, 18--24 Jul 2021.

\bibitem[Athey et~al.(2021)Athey, Imbens, Metzger, and Munro]{athey2021using}
Athey, S., Imbens, G.~W., Metzger, J., and Munro, E.
\newblock Using wasserstein generative adversarial networks for the design of
  monte carlo simulations.
\newblock \emph{Journal of Econometrics}, 2021.

\bibitem[Chan et~al.(2016)Chan, Yam, and Zhang]{chan2016globally}
Chan, K. C.~G., Yam, S. C.~P., and Zhang, Z.
\newblock Globally efficient non-parametric inference of average treatment
  effects by empirical balancing calibration weighting.
\newblock \emph{Journal of the Royal Statistical Society. Series B, Statistical
  methodology}, 78\penalty0 (3):\penalty0 673, 2016.

\bibitem[Chernozhukov et~al.(2018)Chernozhukov, Chetverikov, Demirer, Duflo,
  Hansen, Newey, and Robins]{chernozhukov2018double}
Chernozhukov, V., Chetverikov, D., Demirer, M., Duflo, E., Hansen, C., Newey,
  W., and Robins, J.
\newblock Double/debiased machine learning for treatment and structural
  parameters: Double/debiased machine learning.
\newblock \emph{The Econometrics Journal}, 21\penalty0 (1), 2018.

\bibitem[Cole \& Hern{\'a}n(2008)Cole and Hern{\'a}n]{cole2008constructing}
Cole, S.~R. and Hern{\'a}n, M.~A.
\newblock Constructing inverse probability weights for marginal structural
  models.
\newblock \emph{American journal of epidemiology}, 168\penalty0 (6):\penalty0
  656--664, 2008.

\bibitem[Crump et~al.(2009)Crump, Hotz, Imbens, and Mitnik]{crump2009dealing}
Crump, R.~K., Hotz, V.~J., Imbens, G.~W., and Mitnik, O.~A.
\newblock Dealing with limited overlap in estimation of average treatment
  effects.
\newblock \emph{Biometrika}, 96\penalty0 (1):\penalty0 187--199, 2009.

\bibitem[D{\'\i}az \& van~der Laan(2013)D{\'\i}az and van~der
  Laan]{diaz2013targeted}
D{\'\i}az, I. and van~der Laan, M.~J.
\newblock Targeted data adaptive estimation of the causal dose--response curve.
\newblock \emph{Journal of Causal Inference}, 1\penalty0 (2):\penalty0
  171--192, 2013.

\bibitem[Flores et~al.(2012)Flores, Flores-Lagunes, Gonzalez, and
  Neumann]{flores2012estimating}
Flores, C.~A., Flores-Lagunes, A., Gonzalez, A., and Neumann, T.~C.
\newblock Estimating the effects of length of exposure to instruction in a
  training program: the case of job corps.
\newblock \emph{Review of Economics and Statistics}, 94\penalty0 (1):\penalty0
  153--171, 2012.

\bibitem[Fong et~al.(2018)Fong, Hazlett, Imai, et~al.]{fong2018covariate}
Fong, C., Hazlett, C., Imai, K., et~al.
\newblock Covariate balancing propensity score for a continuous treatment:
  application to the efficacy of political advertisements.
\newblock \emph{The Annals of Applied Statistics}, 12\penalty0 (1):\penalty0
  156--177, 2018.

\bibitem[Galagate(2016)]{galagate2016causal}
Galagate, D.
\newblock \emph{{Causal Inference With a Continuous Treatment and Outcome:
  Alternative Estimators for Parametric Dose-response Functions With
  Applications}}.
\newblock PhD thesis, University of Maryland, 2016.

\bibitem[Hainmueller(2012)]{hainmueller2012entropy}
Hainmueller, J.
\newblock Entropy balancing for causal effects: A multivariate reweighting
  method to produce balanced samples in observational studies.
\newblock \emph{Political analysis}, pp.\  25--46, 2012.

\bibitem[Hazlett(2020)]{hazlett2018kernel}
Hazlett, C.
\newblock Kernel balancing: A flexible non-parametric weighting procedure for
  estimating causal effects.
\newblock \emph{Statistica Sinica}, 2020.

\bibitem[Hirano \& Imbens(2004)Hirano and Imbens]{hirano2004propensity}
Hirano, K. and Imbens, G.~W.
\newblock The propensity score with continuous treatments.
\newblock \emph{Applied Bayesian modeling and causal inference from
  incomplete-data perspectives}, 226164:\penalty0 73--84, 2004.

\bibitem[Imai \& Van~Dyk(2004)Imai and Van~Dyk]{imai2004causal}
Imai, K. and Van~Dyk, D.~A.
\newblock Causal inference with general treatment regimes: Generalizing the
  propensity score.
\newblock \emph{Journal of the American Statistical Association}, 99\penalty0
  (467):\penalty0 854--866, 2004.

\bibitem[Imbens \& Rubin(2015)Imbens and Rubin]{imbens2015causal}
Imbens, G.~W. and Rubin, D.~B.
\newblock \emph{Causal inference in statistics, social, and biomedical
  sciences}.
\newblock Cambridge University Press, 2015.

\bibitem[Kallus \& Santacatterina(2019)Kallus and
  Santacatterina]{kallus2019kernel}
Kallus, N. and Santacatterina, M.
\newblock Kernel optimal orthogonality weighting: A balancing approach to
  estimating effects of continuous treatments.
\newblock \emph{arXiv preprint arXiv:1910.11972}, 2019.

\bibitem[Kang et~al.(2007)Kang, Schafer, et~al.]{kang2007demystifying}
Kang, J.~D., Schafer, J.~L., et~al.
\newblock Demystifying double robustness: A comparison of alternative
  strategies for estimating a population mean from incomplete data.
\newblock \emph{Statistical science}, 22\penalty0 (4):\penalty0 523--539, 2007.

\bibitem[Kennedy et~al.(2017)Kennedy, Ma, McHugh, and
  Small]{kennedy2017nonparametric}
Kennedy, E.~H., Ma, Z., McHugh, M.~D., and Small, D.~S.
\newblock Nonparametric methods for doubly robust estimation of continuous
  treatment effects.
\newblock \emph{Journal of the Royal Statistical Society. Series B, Statistical
  Methodology}, 79\penalty0 (4):\penalty0 1229, 2017.

\bibitem[Kingma \& Ba(2014)Kingma and Ba]{kingma2014adam}
Kingma, D.~P. and Ba, J.
\newblock Adam: A method for stochastic optimization.
\newblock \emph{arXiv:1412.6980}, 2014.

\bibitem[Lakshminarayanan et~al.(2017)Lakshminarayanan, Pritzel, and
  Blundell]{lakshminarayanan2017simple}
Lakshminarayanan, B., Pritzel, A., and Blundell, C.
\newblock Simple and scalable predictive uncertainty estimation using deep
  ensembles.
\newblock In \emph{NeurIPS}, pp.\  6405--6416, 2017.

\bibitem[Li et~al.(2018)Li, Morgan, and Zaslavsky]{li2018balancing}
Li, F., Morgan, K.~L., and Zaslavsky, A.~M.
\newblock Balancing covariates via propensity score weighting.
\newblock \emph{Journal of the American Statistical Association}, 113\penalty0
  (521):\penalty0 390--400, 2018.

\bibitem[Nie et~al.(2020)Nie, Ye, Nicolae, et~al.]{nie2020vcnet}
Nie, L., Ye, M., Nicolae, D., et~al.
\newblock Vcnet and functional targeted regularization for learning causal
  effects of continuous treatments.
\newblock In \emph{International Conference on Learning Representations}, 2020.

\bibitem[Pearl(2009)]{pearl2009causality}
Pearl, J.
\newblock \emph{Causality}.
\newblock Cambridge university press, 2009.

\bibitem[Peters et~al.(2017)Peters, Janzing, and
  Sch{\"o}lkopf]{peters2017elements}
Peters, J., Janzing, D., and Sch{\"o}lkopf, B.
\newblock \emph{Elements of causal inference: foundations and learning
  algorithms}.
\newblock The MIT Press, 2017.

\bibitem[Rappold(2020)]{Rappold2020}
Rappold, A.
\newblock {Annual PM2.5 and cardiovascular mortality rate data: Trends modified
  by county socioeconomic status in 2, 132 US counties}, 2020.
\newblock URL
  \url{https://edg.epa.gov/metadata/catalog/search/resource/details.page?uuid=https://doi.org/10.23719/1506014}.

\bibitem[Robins et~al.(2000)Robins, Hern{\'a}n, and
  Brumback]{robins2000marginal}
Robins, J., Hern{\'a}n, M., and Brumback, B.
\newblock Marginal structural models and causal inference in epidemiology.
\newblock \emph{Epidemiology}, 11\penalty0 (5):\penalty0 550--560, 2000.

\bibitem[Robins et~al.(2007)Robins, Sued, Lei-Gomez, and
  Rotnitzky]{robins2007comment}
Robins, J., Sued, M., Lei-Gomez, Q., and Rotnitzky, A.
\newblock Comment: Performance of double-robust estimators when" inverse
  probability" weights are highly variable.
\newblock \emph{Statistical Science}, 22\penalty0 (4):\penalty0 544--559, 2007.

\bibitem[Smith \& Todd(2005)Smith and Todd]{smith2005does}
Smith, J.~A. and Todd, P.~E.
\newblock Does matching overcome lalonde's critique of nonexperimental
  estimators?
\newblock \emph{Journal of econometrics}, 125\penalty0 (1-2):\penalty0
  305--353, 2005.

\bibitem[Spirtes et~al.(2000)Spirtes, Glymour, Scheines, and
  Heckerman]{spirtes2000causation}
Spirtes, P., Glymour, C.~N., Scheines, R., and Heckerman, D.
\newblock \emph{Causation, prediction, and search}.
\newblock MIT press, 2000.

\bibitem[Van~der Vaart(2000)]{van2000asymptotic}
Van~der Vaart, A.~W.
\newblock \emph{Asymptotic statistics}, volume~3.
\newblock Cambridge university press, 2000.

\bibitem[Vegetabile et~al.(2021)Vegetabile, Griffin, Coffman, Cefalu, Robbins,
  and McCaffrey]{vegetabile2021nonparametric}
Vegetabile, B.~G., Griffin, B.~A., Coffman, D.~L., Cefalu, M., Robbins, M.~W.,
  and McCaffrey, D.~F.
\newblock Nonparametric estimation of population average dose-response curves
  using entropy balancing weights for continuous exposures.
\newblock \emph{Health Services and Outcomes Research Methodology}, 21\penalty0
  (1):\penalty0 69--110, 2021.

\bibitem[Wang \& Zubizarreta(2020)Wang and Zubizarreta]{wang2020minimal}
Wang, Y. and Zubizarreta, J.~R.
\newblock Minimal dispersion approximately balancing weights: asymptotic
  properties and practical considerations.
\newblock \emph{Biometrika}, 107\penalty0 (1):\penalty0 93--105, 2020.

\bibitem[Wong \& Chan(2018)Wong and Chan]{wong2018kernel}
Wong, R.~K. and Chan, K. C.~G.
\newblock Kernel-based covariate functional balancing for observational
  studies.
\newblock \emph{Biometrika}, 105\penalty0 (1):\penalty0 199--213, 2018.

\bibitem[Zeng et~al.(2020)Zeng, Assaad, Tao, Datta, Carin, and
  Li]{zeng2020double}
Zeng, S., Assaad, S., Tao, C., Datta, S., Carin, L., and Li, F.
\newblock Double robust representation learning for counterfactual prediction.
\newblock \emph{arXiv:2010.07866}, 2020.

\bibitem[Zhao(2019)]{zhao2019covariate}
Zhao, Q.
\newblock Covariate balancing propensity score by tailored loss functions.
\newblock \emph{The Annals of Statistics}, 47\penalty0 (2):\penalty0 965--993,
  2019.

\bibitem[Zhao \& Percival(2016)Zhao and Percival]{zhao2016entropy}
Zhao, Q. and Percival, D.
\newblock Entropy balancing is doubly robust.
\newblock \emph{Journal of Causal Inference}, 5\penalty0 (1), 2016.

\bibitem[Zhu et~al.(2015)Zhu, Coffman, and Ghosh]{zhu2015boosting}
Zhu, Y., Coffman, D.~L., and Ghosh, D.
\newblock A boosting algorithm for estimating generalized propensity scores
  with continuous treatments.
\newblock \emph{Journal of causal inference}, 3\penalty0 (1):\penalty0 25--40,
  2015.

\bibitem[Zubizarreta(2015)]{zubizarreta2015stable}
Zubizarreta, J.~R.
\newblock Stable weights that balance covariates for estimation with incomplete
  outcome data.
\newblock \emph{Journal of the American Statistical Association}, 110\penalty0
  (511):\penalty0 910--922, 2015.

\bibitem[Zubizarreta et~al.(2011)Zubizarreta, Reinke, Kelz, Silber, and
  Rosenbaum]{zubizarreta2011matching}
Zubizarreta, J.~R., Reinke, C.~E., Kelz, R.~R., Silber, J.~H., and Rosenbaum,
  P.~R.
\newblock Matching for several sparse nominal variables in a case-control study
  of readmission following surgery.
\newblock \emph{The American Statistician}, 65\penalty0 (4):\penalty0 229--238,
  2011.

\end{thebibliography}
\bibliographystyle{icml2022}
\newpage
\clearpage


\clearpage
\appendix
\onecolumn

\section{Proofs of the Theorems}
\label{sec:proofs}
Whenever the context of an expectation operation is not clear, we disambiguate it by specifying the variable that the expectation is taken over and its distribution $\ex_{\bfx \sim f(\bmx)}[\bfx]$.


\subsection{Proof of Theorem \ref{thm:bias}}
\label{sec:proof1}
\begin{proof}
Given that the logarithm function is a strictly increasing function, we can omit it in the optimization; i.e., $\bm{\lambda}^\star = \argmin_{\bm{\lambda}} \ex\left[\exp(\bxg^\top\bm{\lambda} + \ell)\right]$. Because this is an unconstrained optimization, the optimal solution occurs when the gradient is equal to zero.
\begin{align}
    \ex[\bxg\exp\left(\bxg^{\top}\bm{\lambda}^{\star} +\ell\right)] &= \bm{0}, \nonumber\\
    \ex[\bxg w^\star(\mra, \bfx)] &= \bm{0}, \label{eq:step1}
\end{align}
where the last equation is due to the equation of the weights in the population optimization problem (Eq. \ref{eq:pop}). 

Using the definition for the $\bxg$ vector, Eq. (\ref{eq:step1}) implies that $\ex[w^\star(\mra, \bfx)\mra \bm{\phi}(\bfx)] = \bm{0}$. Thus, we conclude that in the weighted population (with distribution $\widetilde{F}$), the $\mra$ and $ \bm{\phi}(\bfx)$ are uncorrelated:
\begin{equation}
    \ex_{(\mra, \bfx) \sim \widetilde{F}}[\mra\bm{\phi}(\bfx)] = \bm{0}\label{eq:uncorr}
\end{equation}

For every set $\mathcal{B} \subset \seta\times \setx$, we can write:
\begin{equation}
    \widetilde{F}(\mathcal{B}) = \int_{\mathcal{B}} w^{\star}(a, \bmx) \mathrm{d}F(a, \bmx).
\end{equation}
The Radon-Nikodym theorem implies that $w^{\star}(a, \bmx)$ is the Radon-Nikodym derivative:
\begin{align}
    w^{\star}(\bm{x}, a) &= \frac{\mathrm{d}\widetilde{F}(\bm{x}, a)}{\mathrm{d}F(\bm{x}, a)} = \frac{\widetilde{f}(\bm{x}, a)}{f(\bm{x}, a)}\\
    &= \frac{\widetilde{f}(\bm{x})\widetilde{f}(a) + \left\{\widetilde{f}(\bm{x}, a) - \widetilde{f}(\bm{x})\widetilde{f}(a) \right\}}{f(\bm{x}, a)},\\
    & = w_{GSW}(a, \bmx) + \frac{\widetilde{f}(\bm{x}, a) - \widetilde{f}(\bm{x})\widetilde{f}(a)}{f(\bm{x}, a)}
\end{align}
Thus, using Eq. (\ref{eq:uncorr}) and Assumptions 1 and 3 we can write
\begin{equation*}
    \sup_{a, \bmx}\left| w^{\star}(a, \bmx) - w_{GSW}(a, \bmx) \right| \leq \delta_{\bm{\phi}_K}/c.
\end{equation*}
\end{proof}

\subsection{Theorem \ref{thm:cons}}
\label{sec:proof2}
\begin{proof}
Given that the logarithm function is a strictly increasing function, we can omit it in the optimizations. Thus the sample and population solutions are:
\begin{align*}
\widehat{\bm{\lambda}}_n &= \argmin_{\bm{\lambda}}  \frac{1}{n}\sum_{i=1}^{n}\exp(\bmg_i^\top\bm{\lambda} + \ell_i), \\
    \bm{\lambda}^\star &= \argmin_{\bm{\lambda}}  \ex\left[\exp(\bxg^\top\bm{\lambda} + \ell)\right].
\end{align*}
The estimator is an M-estimator and given our sample-splitting, the proof follows the asymptotic normality of the estimator \citep[Chapter 5.3]{van2000asymptotic}. 
\begin{equation}
\sqrt{n}\left(\widehat{\bm{\lambda}}_n - \bm{\lambda}^{\star}\right) ~ \overset{d}{\to} ~ \mathcal{N}(\bm{0}, \bm{V}),\label{eq:cons_lam}
\end{equation}
To obtain the value of $\bm{V_1}$, note that the optimal sample solution occurs at the solution of the following equation (Z-estimator equation):
\begin{equation*}
    \sum_{i=1}^{n} \bm{g}_i\exp\left(\bm{g}_i^{\top}\widehat{\bm{\lambda}}_n\right) = \bm{0}.
\end{equation*}
Thus, the score function is $\bm{\psi}_{\bm{\lambda}} = \bm{g}_i\exp\left(\bm{g}_i^{\top}\bm{\lambda}\right)$. We denote the matrix of derivatives of the score function by $\dot{\bm{\psi}}_{\bm{\lambda}}$ whose elements are defined as $\dot{\psi}_{\bm{\lambda}, kk'} =  \sfrac{\partial \psi_{\bm{\lambda}, k}}{\partial \lambda_{k'}}$. Using the theorem in \citep[Chapter 5.3]{van2000asymptotic}, we can write:
\begin{equation}
    \bm{V} = \ex[\dot{\bm{\psi}}_{\bm{\lambda}^{\star}}]^{-1} \ex[\bm{\psi}_{\bm{\lambda}^{\star}}\bm{\psi}_{\bm{\lambda}^{\star}}^{\top}] \ex[\dot{\bm{\psi}}_{\bm{\lambda}^{\star}}]^{-1}.
    \label{eq:vform}
\end{equation}
In the above equation we have assumed that $\ex[\dot{\bm{\psi}}_{\bm{\lambda}^{\star}}]$ matrix is invertable. 
An unbiased sample estimation of $\bm{V}$ can be obtained by substituting $\widehat{\bm{\lambda}}_n$ in place of $\bm{\lambda}^{\star}$ and taking empirical expectations.

An application of the delta method on Eq. (\ref{eq:cons_lam}) yields:
\begin{align}
    \sqrt{n}\left(\frac{\exp\left(\bmg_i^{\top}\widehat{\bm{\lambda}}_n + \ell_i\right)}{\frac{1}{n}\sum_{i=1}^{n}\exp\left(\bmg_i^{\top}\widehat{\bm{\lambda}}_n + \ell_i\right)} - \frac{\exp\left(\bmg_i^{\top}\bm{\lambda}^{\star} + \ell_i\right)}{\frac{1}{n}\sum_{i=1}^{n}\exp\left(\bmg_i^{\top}\bm{\lambda}^{\star} + \ell_i\right)}\right) &  ~ \overset{d}{\to} ~ \mathcal{N}(\bm{0}, \sigma^2),\\
    \sqrt{n}\left(\widehat{w}_n(a_i, \bmx_i) -  \frac{\exp\left(\bmg_i^{\top}\bm{\lambda}^{\star} + \ell_i\right)}{\ex[\exp(\bxg^{\top}\bm{\lambda}^{\star}+\lambda)] }\right)  ~ & \overset{d}{\to} ~ \mathcal{N}(\bm{0}, \sigma^2), \label{eq:slutsky}\\
    \sqrt{n}\left(\widehat{w}_n(a_i, \bmx_i) - w^{\star}(a_i, \bmx_i)\right)  ~ & \overset{d}{\to} ~ \mathcal{N}(\bm{0}, \sigma^2), \label{eq:last_proof2}
\end{align}
where Eq. \eqref{eq:slutsky} is due to Slutsky's theorem and Eq. \eqref{eq:last_proof2} is obtained by substitution of the definition for $w^{\star}(a_i, \bmx_i)$. The variance is obtained by defining the Softmax function $s(\bm{\lambda}) = \frac{\exp\left(\bmg_i^{\top}\bm{\lambda} + \ell_i\right)}{\frac{1}{n}\sum_{i=1}^{n}\exp\left(\bmg_i^{\top}\bm{\lambda} + \ell_i\right)}$. We denote the gradient of the Softmax function by $\nabla s(\bm{\lambda})$. We can write \citep[Chapter 3]{van2000asymptotic}:
\begin{equation*}
    \sigma^2(a_i, \bmx_i) = \nabla s(\bm{\lambda}^{\star})^{\top}\, \bm{V}\, \nabla s(\bm{\lambda}^{\star}).
\end{equation*}
Substituting the value of $\bm{V}$ from \eqref{eq:vform}, we conclude:
\begin{equation*}
    \boxed{\sigma^2(a_i, \bmx_i) = \nabla s(\bm{\lambda}^{\star})^{\top}\, \ex[\dot{\bm{\psi}}_{\bm{\lambda}^{\star}}]^{-1} \ex[\bm{\psi}_{\bm{\lambda}^{\star}}\bm{\psi}_{\bm{\lambda}^{\star}}^{\top}] \ex[\dot{\bm{\psi}}_{\bm{\lambda}^{\star}}]^{-1}\, \nabla s(\bm{\lambda}^{\star}).}
\end{equation*}
Note that the value of the softmax function depends on the value of $(a_i, \bmx_i)$ at each point.
\end{proof}

\section{Neural Network and Training Details}
\label{sec:details}
\subsection{Details of the $\ell_{\theta}$ Neural Network}
The $\ell_{\bm{\theta}}$ network is defined as follows:
\begin{align*}
    \ell_{\bm{\theta}}(z) = cz + \mathrm{dense3}(\,\mathrm{elu}(\,\mathrm{layer\_norm}(\,\mathrm{dense2}(\,\mathrm{tanh}(\,\mathrm{dense1}(\,z\,)\,)\,)\,)\,)\,)
\end{align*}
The linear term $cz$ acts as a skip connection. The input and output dimensions for the dense linear layers are as follows:
\begin{align*}
    \mathrm{dense1}:& 1 \mapsto h,\\
    \mathrm{dense2}:& h \mapsto h,\\
    \mathrm{dense3}:& h \mapsto 1,
\end{align*}
where $h$ denotes the hidden dimension. Because the softmax function is invariant to the constant shifts, we do not have any bias terms for $\mathrm{dense3}$ and the skip connection. $\mathrm{dense2}$ also does not have the bias because of the proceeding layer normalization. The dimension $h$ has been tuned as a hyperparameter on a validation data and set to $10$.
 
\subsection{Details of the Propensity Score Computation for IPW}
We model both $f(a)$ and $f(a|\bmx)$ as univariate normal distributions. This is the correct assumption in our synthetic data. The marginal distribution $f(a)$ is estimated by simply finding the mean and standard deviation of the observed treatment values. For the conditional distribution, we write $\mra | \bfx \sim \mathcal{N}(\mu_a(\bfx), \sigma^2_{a|\bmx})$, where $\mu_a(\bfx)$ is modeled using a feedforward neural network with two layers and $ \sigma^2_{a|\bmx}$ is estimated using the residuals of the neural network predictions. The dimension of the neural network has been tuned as a hyperparameter on validation data and set to $30$.

\subsection{Further Training Details}
We used PyTorch to implement E2B. For reproducibility purposes, we provide the final settings used for training:
\begin{itemize}
    \item Learning algorithm: Adam with learning rate 0.001, no AMSGrad.
    \item Batch size: 1000
    \item Max epochs: 400
    \item Weight decay: $2.5 \times 10^{-5}$.
    \item Validation on a dataset of size 400, every 10 steps.
\end{itemize}

\subsection{Detail of Permutation Weighting}
\label{sec:perm}
We created the stacked data by stacking $\{(\bmx_i, a_i, a_i\odot \bmx_i)\}_{i=1}^{n}$ and $\{(\bmx_i, \widetilde{a}_i, \widetilde{a}_i\odot \bmx_i)\}_{i=1}^{n}$, where $\widetilde{a}$ are permutations of the original treatments. We trained a random forest classifier to predict whether each data is from the permuted or the original set. We tried both random forests and neural networks and obtained better results with the former. We also calibrated the predicted probabilities of the classifier before computation of the weights.

\section{Data and Preprocessing Description}
\label{sec:data}
\begin{landscape}
\begin{table}[t]
    \caption{NSAPH Data Description}
    \label{tab:my_label}
    \centering
    \begin{tabular}{l|c|c|c|c|c|c|c|c|c|c|c|c}
    \toprule
&PM2.5&CMR&healthfac&population&ses&unemploy&HH\_inc&femaleHH&vacant&owner\_occ&eduattain&pctfam\_pover\\
\midrule
count&2132.0&2132.0&2132.0&2132.0&2132.0&2132.0&2132.0&2132.0&2132.0&2132.0&2132.0&2132.0\\
mean&6.17&255.25&0.18&10.78&0.0&7.85&10.69&11.92&14.25&71.44&35.03&11.25\\
std&1.45&56.76&0.5&1.26&0.96&2.83&0.24&3.94&8.71&7.76&7.07&5.2\\
min&2.19&106.14&-2.85&6.2&-1.84&0.0&9.91&2.1&3.8&19.3&9.4&0.0\\
25\%&5.51&215.38&0.0&10.04&-0.67&6.0&10.54&9.3&8.8&67.7&30.4&7.6\\
50\%&6.43&248.16&0.14&10.62&-0.14&7.6&10.67&11.2&11.65&72.7&35.4&10.55\\
75\%&7.15&288.77&0.34&11.46&0.47&9.3&10.83&13.6&16.6&76.7&39.9&13.82\\
max&9.3&557.43&3.33&16.07&6.46&30.9&11.66&38.0&74.0&89.7&54.6&44.9\\
\bottomrule
    \end{tabular}
\end{table}

\begin{figure}[t]
    \centering
    \begin{subfigure}[t]{0.5\textwidth}
    \centering
    \includegraphics[width=\textwidth]{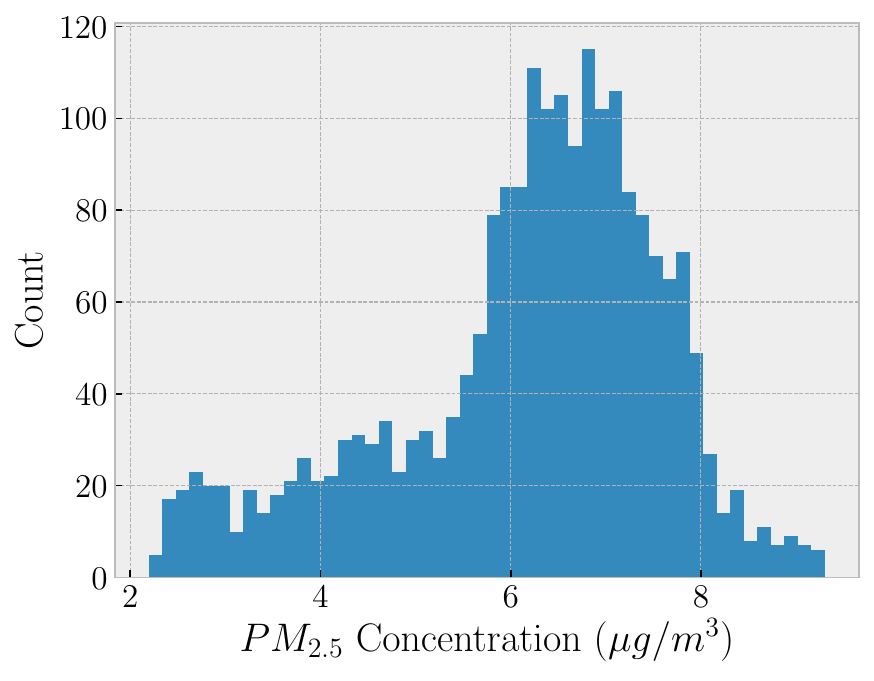}
    \caption{Histogram of $PM_{2.5}$}
    \label{fig:hist_pm25}
    \end{subfigure}
    \quad
    \begin{subfigure}[t]{0.5\textwidth}
    \centering
    \includegraphics[width=\textwidth]{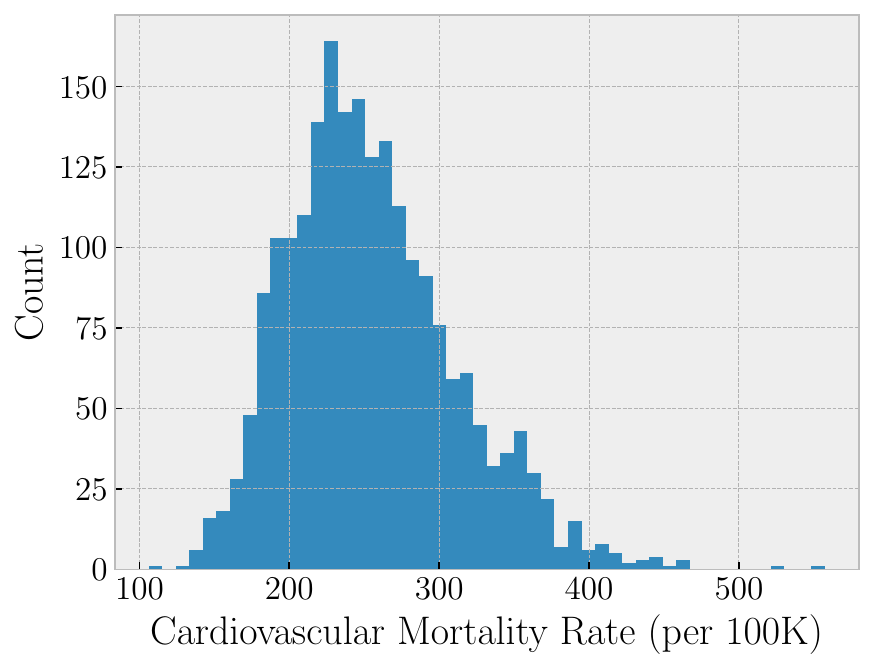}
    \caption{Histogram of Cardiovascular Mortality Rate}
    \label{fig:hist_cmr}
    \end{subfigure}
    \caption{The Histograms of Data}
    \label{fig:hists}
\end{figure}

\end{landscape}

\end{document}